\documentclass[11pt]{article}

\usepackage[final]{acl}

\usepackage{times}
\usepackage{latexsym}
\usepackage{booktabs}
\usepackage{multicol}
\usepackage{multirow}
\usepackage{graphicx}
\usepackage{amsmath}
\usepackage{amssymb}
\usepackage{bm}
\usepackage{amsthm}
\usepackage{xspace}

\newtheorem{theorem}{Theorem}

\theoremstyle{definition}
\newtheorem{definition}{Definition}
\newcommand{\EHGRAG}{EHRAG\xspace}

\usepackage[utf8]{inputenc}
\usepackage{booktabs}
\usepackage{tabularx}
\usepackage{pifont}
\usepackage[table]{xcolor} 
\usepackage{enumitem}

\newcommand{\cmark}{\ding{52}}
\newcommand{\xmark}{\ding{56}}

\definecolor{headergray}{HTML}{EFEFEF}
\definecolor{baselinepink}{HTML}{FFF0F0}
\definecolor{proposelblue}{HTML}{F0F7FF}

\usepackage{microtype}
\usepackage{inconsolata}
\usepackage{graphicx}

\title{\EHGRAG: Bridging Semantic Gaps in Lightweight GraphRAG via Hybrid Hypergraph Construction and Retrieval}

\author{
  Yifan Song\textsuperscript{1}, \;
  Xingjian Tao\textsuperscript{1}, \;
  Zhicheng Yang\textsuperscript{1}, \;
  Yihong Luo\textsuperscript{2}, \;
  and Jing Tang\textsuperscript{1,2}\thanks{Corresponding Author: Jing Tang.} \\
  \textsuperscript{1}The Hong Kong University of Science and Technology (Guangzhou) \\
  \textsuperscript{2}The Hong Kong University of Science and Technology \\
  \texttt{ysong853@connect.hkust-gz.edu.cn}, \;
  \texttt{jingtang@ust.hk}
}

\begin{document}
\maketitle
\begin{abstract}

Graph-based Retrieval-Augmented Generation (GraphRAG) enhances LLMs by structuring corpus into graphs to facilitate multi-hop reasoning. While recent lightweight approaches reduce indexing costs by leveraging Named Entity Recognition (NER), they rely strictly on structural co-occurrence, failing to capture latent semantic connections between disjoint entities. To address this, we propose \textbf{\EHGRAG}, a lightweight RAG framework that constructs a hypergraph capturing both structure and semantic level relationships, employing a hybrid structural-semantic retrieval mechanism. Specifically, \EHGRAG constructs structural hyperedges based on sentence-level co-occurrence with lightweight entity extraction and semantic hyperedges by clustering entity text embeddings, ensuring the hypergraph encompasses both structural and semantic information. For retrieval, \EHGRAG performs a structure-semantic hybrid diffusion with topic-aware scoring and personalized pagerank (PPR) refinement to identify the top-k relevant documents. Experiments on four datasets show that \EHGRAG outperforms state-of-the-art baselines while maintaining linear indexing complexity and zero token consumption for construction. Code is available at \url{https://github.com/yfsong00/EHRAG}.
\end{abstract}
\begin{sloppy}
\section{Introduction}
\label{sec:intro}

Retrieval Augmented Generation (RAG) emerges as a promising paradigm for minimizing hallucinations in Large Language Models (LLMs) by based on external knowledge~\cite{lewis2020retrieval,shuster2021retrieval,ji2023survey,gao2023retrieval,asai2023self,qiu2025efficient,tao2024llms,linghu2025llm}. While standard RAG systems excel at retrieving explicit information, they often struggle with complex queries that require multi-hop reasoning across disparate documents. To address this, Graph-based RAG (GraphRAG) approaches~\cite{edge2024local} have been introduced, which organize the corpus as a structured graph to facilitate multi-step information propagation.

Despite their effectiveness, existing GraphRAG methods face a significant efficiency bottleneck~\cite{han2025rag,peng2024graph}. Traditional approaches typically rely on LLMs to extract entity-relation triplets and then construct the knowledge graphs. This process incurs prohibitive computational costs, which makes them impractical for large-scale corpus. To mitigate this, recent lightweight frameworks such as LinearRAG~\cite{zhuang2025linearrag} have proposed replacing expensive LLM-based relation extraction with efficient Named Entity Recognition (NER) and modeling document structures directly.

However, we argue that these lightweight approaches introduce a critical limitation: the \textbf{Semantic Gap}. By relying solely on explicit structural co-occurrence (i.e., entities appearing in the same sentence or document), these methods fail to capture latent connections between semantically related but structurally disjoint entities. Consider the example illustrated in Figure~\ref{fig:teaser}. To answer the question \textit{'Who is the spouse of the current monarch of the UK?'}, a system must connect 'monarch' in document B to 'Queen' in document A. Structure-only graph would treat these as distinct nodes because they are not same phrase and never appear in the same context window. Consequently, the reasoning chain is broken, which leads to retrieval failure. 

To bridge this gap without sacrificing efficiency, we propose \textbf{\EHGRAG} (\underline{E}fficient \underline{H}ypergraph-based \underline{RAG}), a novel framework that unifies structural and semantic indexing via hypergraphs. Unlike simple graphs, hypergraphs use hyperedges to connect arbitrary sets of nodes, making them naturally suitable for modeling both explicit document inclusion (structural hyperedges) and implicit semantic clusters (semantic hyperedges). Specifically, we employ lightweight NER for node extraction and construct structural hyperedges based on sentence-level co-occurrence. Simultaneously, we cluster entity embeddings to form semantic hyperedges, which connect semantically similar entities across different documents. During retrieval, we introduce a hybrid retrieval mechanism that utilizes diffusion-based entity activation to propagate query influence through explicit and latent pathways, followed by topic-aware passage scoring and PPR refinement to accurately identify the top-$k$ relevant documents based on both content and graph structure.

\begin{figure}[t]
    \centering
    \includegraphics[width=\columnwidth]{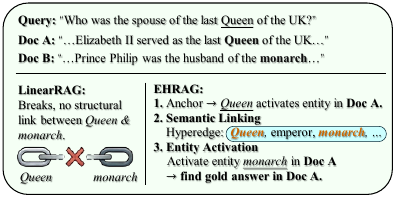} 
    \caption{An example of the semantic gap. While structural graph fails to link disjoint entities (monarch and Queen), \EHGRAG bridges them via a semantic hyperedge, enabling the retrieval of the multi-hop reasoning.}
    \label{fig:teaser}
\vspace{-3mm}
\end{figure}

To validate the effectiveness of \EHGRAG, we conduct experiments on four multi-hop QA benchmarks. Experimental results demonstrate that our method significantly outperforms state-of-the-art baselines including HippoRAG2 and LinearRAG, improving accuracy by up to 6.9\% on 2WikiMultiHop while maintaining zero token consumption for indexing.

In summary, our contributions are as follows:
\begin{itemize}
\item We identify the \textit{semantic gap} in existing lightweight GraphRAG methods, where the lack of semantic connectivity compromises multi-hop reasoning.
\item We propose \EHGRAG, a hybrid hypergraph-based RAG framework that efficiently integrates structural co-occurrence and latent semantic correlations via entity activation by hypergraph diffusion and well designed passage scoring strategy.
\item Extensive experiments across four benchmarks demonstrate that \EHGRAG achieves state-of-the-art performance with linear indexing complexity, offering a scalable solution for knowledge-intensive tasks.
\end{itemize}
 \section{Preliminaries}
\subsection{Backgrounds}

Given a corpus $\mathcal{C} = \{d_1, \dots, d_{|\mathcal{C}|}\}$ and a query $q$, retrieval-augmented generation (RAG) aims to retrieve relevant documents to support answer generation. While vanilla RAG relies on vector similarity, it often treats documents as independent units, struggling with complex queries that require multi-hop reasoning across interconnected information. To overcome this limitation, graph-based RAG structure the corpus into knowledge graphs to improving multi-step information propagation. However, traditional GraphRAG methods typically depend on LLMs to extract fine-grained entity-relation triples, which incurs prohibitive computational costs and latency. 

Recently, lightweight methods (e.g., LinearRAG) replace expensive relation extraction with Named Entity Recognition (NER), modeling relationships primarily through explicit structural co-occurrence. Despite their efficiency, these pairwise graph structures often fail to capture latent semantic correlations between spatially disjoint entities. To bridge this gap, we model the corpus as a hypergraph $\mathcal{H} = (\mathcal{V}, \mathcal{E})$, where the node set $\mathcal{V}$ consists of unique entities extracted via lightweight NER. Unlike simple edges, a hyperedge $e \in \mathcal{E}$ connects an arbitrary subset of entities ($e \subseteq \mathcal{V}$), enabling the unified modeling of both explicit structural inclusion (entities within a sentence) and implicit semantic grouping (entities within a semantic cluster). The hypergraph is always represented by an incidence matrix $\mathbf{H} \in \{0, 1\}^{|\mathcal{V}| \times |\mathcal{E}|}$, where $H(v, e) = 1$ indicates entity $v$ belongs to hyperedge $e$.

\subsection{Lightweight Node Extraction}

Efficient graph-based RAG uses lightweight Named Entity Recognition (NER) models (e.g., GLiNER or SpaCy) to extract entities directly from the raw text, thereby bypassing the computationally intensive relation extraction (RE) process. For each sentence $s_{i,j}$ in document $d_i$, we extract a set of entities $E_{i,j}$. The global node set $\mathcal{V}$ is the union of all unique entities: $\mathcal{V} = \bigcup_{d_i \in \mathcal{C}} \bigcup_{s_{i,j} \in d_i} E_{i,j}$. This process scales linearly with the corpus size, effectively avoiding the $\mathcal{O}(N^2)$ complexity associated with pairwise relation modeling.

\section{Methodology}
\label{sec:method}
To overcome the limitations of existing lightweight GraphRAG methods, we propose \textbf{\EHGRAG}, an efficient hpyergraph-based retrieval framework. The whole process of \EHGRAG is shown in Figure \ref{fig:framework}. In short, \EHGRAG models the corpus as a hybrid hypergraph $\mathcal{H} = (\mathcal{V}, \mathcal{E})$, where $\mathcal{V}$ denotes the set of nodes (i.e. entities in the corpus), $\mathcal{E} = \mathcal{E}_{str} \cup \mathcal{E}_{sem}$ is the set of hyperedges. Here $\mathcal{E}_{str}$ represents the hyperedges generated from the structure of the documents and $\mathcal{E}_{sem}$ represents the hyperedges capturing implicit semantic relations. For each question, \EHGRAG first finds the similar entities as anchors in the hypergraph and then activates the relevant entities to construct a subgraph. Finally, it scores the passages with different dimensions and uses PPR to refine the scores and get the final top-$k$ results. Next, we will give the details of each component of \EHGRAG and analyze why it works. Due to the page limit, we put the theoretical analysis of \EHGRAG in Appendix \ref{sec:theory}.

\begin{figure*}[t]
    \centering
    \includegraphics[width=\textwidth]{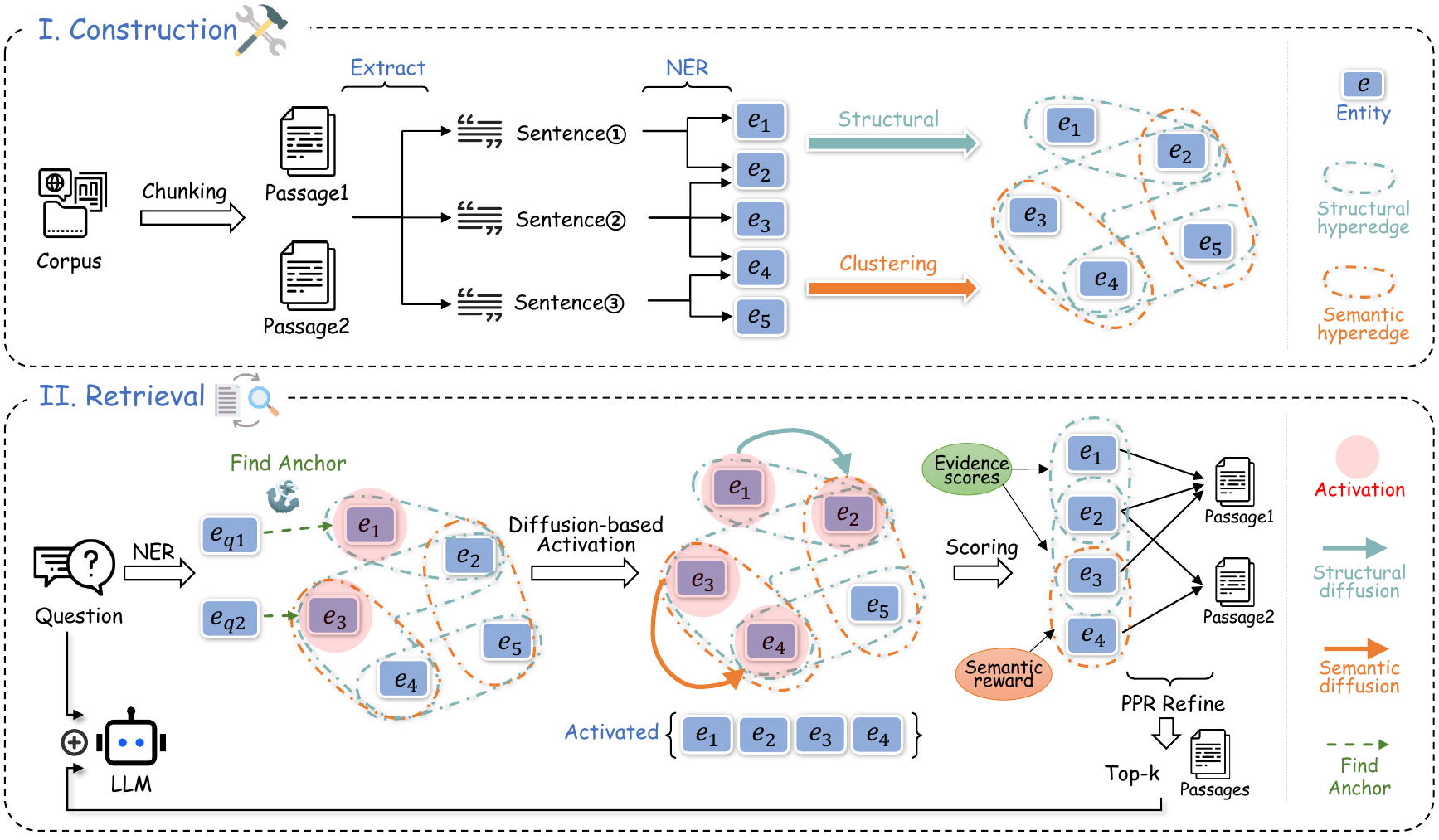} 
    \caption{The overall framework of \textbf{\EHGRAG}. The process is divided into two phases: (1) \textbf{Offline Construction}: We extract entities using lightweight models to build structural hyperedges, while simultaneously clustering entity embeddings (e.g., BIRCH) to form semantic hyperedges. (2) \textbf{Online Retrieval}: User queries activate anchor nodes, initiating a structure-semantic hybrid diffusion process that propagates scores through the hypergraph to rank and retrieve relevant documents for generation.}
    \label{fig:framework}
\end{figure*}

\subsection{Hybrid Hypergraph Construction}
\label{sec:construction}
Unlike general graphs where edges connect pairs of nodes, a hyperedge $e \in \mathcal{E}$ in a hypergraph can connect an arbitrary number of nodes, making it ideal for modeling complex group relationships in RAG. We construct two categories of hyperedges: \textit{Structural Hyperedges} for local context preservation and \textit{Semantic Hyperedges} for global concept linking.

\paragraph{Structural Hyperedges: Explicit Context Modeling.}
To capture the precise context, we treat each sentence as a structural unit. Let $\mathcal{S} = \{s_1, s_2, \dots, s_N\}$ be the set of sentences in the corpus. Following previous lightweight framework \cite{zhuang2025linearrag,zhao20252graphrag}, we employ the lightweight Named Entity Recognition (NER) module (e.g. spaCy package) to extract the entity set $\mathcal{V}$. Evidently, entities appearing in the same sentence exhibit contextual relevance. So we construct a \textbf{structural hyperedge} $e_{s_j}^{str} = \{v_i \in \mathcal{V} \mid v_i \in s_j\}$ for each sentence $s_j$, which contains all entities explicitly appearing within it. Then we can get the incidence matrix $\mathbf{H}^{str} \in \mathbb{R}^{|\mathcal{V}| \times |\mathcal{S}|}$ of the structural hyperedges, which is defined as $\mathbf{H}^{str}_{i, j} = 1$ if $v_i \in e_{s_j}^{str}$ and $0$ otherwise. This construction ensures that entities within the same context window form a fully connected clique in the primal projection, preserving the integrity of local information.

\paragraph{Semantic Hyperedges: Latent Topic Discovery.}
Since only same entities will be combined cross the documents, graphs with only structural connections will fail to link synonymous entities appearing in disjoint contexts (e.g., GPU and Graphics Card). To bridge this semantic gap, we introduce \textbf{semantic hyperedges} via density adaptive projecting in the latent space. Let $\mathbf{x}_i \in \mathbb{R}^d$ be the text embedding of entity $v_i$, entities belonging to the same latent topic cluster around a centroid in the embedding space. Instead of rigid partitioning, we employ a clustering algorithm (e.g., BIRCH) to dynamically identify $K$ latent topic centroids $\mathcal{C} = \{\mathbf{c}_1, \dots, \mathbf{c}_K\}$ without pre-specifying $K$.

For each centroid $\mathbf{c}_k$, we define a semantic hyperedge $e_{sem}^k$ that bonds the top-$D$ semantically closest entities. To incorporate the semantic uncertainty, we assign a continuous weight to the incidence entry based on the kernel distance:
\begin{equation}
    \mathbf{H}^{sem}_{i, k} = 
    \begin{cases} 
        \scriptstyle \exp\left(-\frac{\|\mathbf{x}_i - \mathbf{c}_k\|^2}{\tau}\right) & v_i \in \mathcal{N}_D(\mathbf{c}_k), \\
        \scriptstyle 0 & \text{otherwise},
    \end{cases}
\end{equation}
where $\mathcal{N}_D(\mathbf{c}_k)$ denotes the set of $D$ nearest neighbors to $\mathbf{c}_k$, and $\tau$ is a temperature parameter. This topology allows information to teleport between structurally distant but semantically related entities.

\subsection{Structure-Semantic Mixed Retrieval}
\label{sec:retrieval}
Based on the constructed hybrid hypergraph, \EHGRAG includes a novel two-stage retrieval mechanism: \textit{Diffusion-based Activation} followed by \textit{Topic-aware Passage Scoreing}. To sum up, it first activates the nodes in the hypergraph to generate a subset of entities that are relevant with the question. Then \EHGRAG scores the passages based on the connection between passages and the subset of entities with different views and uses personalized pagerank (\textbf{PPR}) to refine the scores to generates the final top-$k$ passages for retrieval. We will start from the initialization stage to present the details of the retrieval mechanism.

\subsubsection{Anchors Initialization}
To initiate the retrieval, we first map the user query $q$ to the graph. Following existing methods~\cite{zhuang2025linearrag,edge2024local,zhao20252graphrag}, we extract a set of query entities $E_q$ using the same NER module during hypergraph construction. For each query entity $e \in E_q$, we compute its embedding similarity with all node entities in $\mathcal{V}$ and select the most similar nodes as the anchor to start the retrieval.
We define the initial activation state $\mathbf{a}^{(0)}$ for each anchor as $\mathbf{a}^{(0)}(v) = \cos(\mathbf{x}_e, \mathbf{x}_v)$, which ensures that the retrieval starts from multiple robust entry points, tolerating extraction errors or minor morphological variations.

\subsubsection{Diffusion-based Entities Activation}
Starting from the seed activation $\mathbf{a}^{(0)}$, we propagate scores to identify contextually linked entities. Based on our implementation, this process is designed as a **Two-Phase Diffusion**: a one-off Semantic Expansion followed by an Iterative Structural Propagation.

\paragraph{Semantic Diffusion for Topic Projection.}
Before the iterative search, we first expand the seed set to include latent synonymous entities. This corresponds to a weighted projection through the semantic incidence matrix $\mathbf{H}^{sem}$.
For the seed activation vector $\mathbf{a}^{(0)}$, we calculate the semantic expansion vector $\mathbf{a}_{sem}$ as:
\begin{equation}
    \mathbf{a}_{sem} = \gamma \cdot \mathbf{H}^{sem} (\mathbf{H}^{sem})^\top \mathbf{a}^{(0)},
\end{equation}
where $\gamma$ is a decay factor that is used to avoid the semantic shift during multi-hop diffusion. The operator $\mathbf{H}^{sem} (\mathbf{H}^{sem})^\top$ represents hopping from entities to their cluster centroids and then travel to the similar entities.
Then we can get the initial search frontier with potential topic as $\mathbf{a}^{(1)} = \mathbf{a}^{(0)} + \mathbf{a}_{sem}$ while the initial global weight vector is $\mathbf{w} = \mathbf{a}^{(1)}$.

\paragraph{Iterative Structural Diffusion.}
We then perform a $T$-step structural diffusion to capture local context. Unlike standard random walks, edge weights in our graph are dynamically modulated by the query. Let $\mathbf{a}^{(t)}$ be the active frontier at iteration $t$. The propagation to the next hop consists of three steps:

\textbf{Step 1: Entity-to-Sentence Projection.} First, activation flows from entities to the sentences containing them:
\begin{equation}
    \mathbf{s}^{(t)} = (\mathbf{H}^{str})^\top \mathbf{a}^{(t)},
\end{equation}
where $\mathbf{s}^{(t)} \in \mathbb{R}^{|\mathcal{S}|}$ represents the activation potential of each sentence.

\textbf{Step 2: Query-Gated Filtering.} Not all sentences containing active entities are relevant. We strictly gate the passage flow by calculating the similarity between the sentence embeddings $\mathbf{E}_S$ and the query embedding $\mathbf{x}_q$. We construct a dynamic diagonal gating matrix $\mathbf{G}_q$:
\begin{equation}
    \mathbf{G}_q[j,j] = 
    \begin{cases} 
        \mathbf{e}_{s_j}^\top \mathbf{x}_q & \text{if } s_j \in \mathbb{F}(\mathbf{E}_S \mathbf{x}_q, L), \\
        0 & \text{otherwise},
    \end{cases}
\end{equation}
where $\mathbb{F}(\mathbf{x},L)$ is the function that selects the top-$L$ elements from $\mathbf{x}$. This step ensures that activation only flows through sentences that are semantically relevant to the user's question.

\textbf{Step 3: Accumulative Update.} The activation flows back to entities through the gated sentences. The new activation frontier $\Delta \mathbf{a}^{(t+1)}$ is calculated as:
\begin{equation}
    \Delta \mathbf{a}^{(t+1)} = \mathbf{H}^{str} \mathbf{G}_q \mathbf{s}^{(t)}.
\end{equation}
The global weight vector $\mathbf{w}$ and the frontier for the next iteration are updated as:
\begin{equation}
    \mathbf{a}^{(t+1)} = \delta(\Delta \mathbf{a}^{(t+1)}, \epsilon),
\end{equation}
\begin{equation}
    \mathbf{w} \leftarrow \mathbf{w} + \mathbf{a}^{(t+1)},
\end{equation}
where $\delta(\mathbf{x}, \epsilon)$ is a function that only reserves the elements that are larger than $\epsilon$ in $\mathbf{x}$ and $\epsilon$ is a pruning threshold. This process repeats until convergence or maximum iterations are reached, resulting in a final weight vector $\mathbf{w}^*$ that encodes both explicit semantic similarity and implicit multi-hop contextual relevance. Then non-zero elements in $\mathbf{w}^*$ is the set of activated nodes used for scoring the related passages.

\subsubsection{Topic-aware Passages Scoring}
After $T$ iterations of diffusion, we obtain a set of activated entities and topics. To retrieve the final documents, we propose a topic-aware hybrid scoring function that evaluates document relevance from three dimensions. For the question $q$, the score of $d$-th passage $p_d$ is defined as follows:

\begin{equation}
\begin{split}
    S(d) &= \underbrace{S_{d}(q, d)}_{\text{Global Context}} + \lambda_1 \underbrace{\sum_{v \in p_d} \log(1 + \mathbf{w}(v))}_{\text{Explicit Evidence}} \\
    &\quad + \lambda_2 \cdot\underbrace{ \log(1 + \sum_{v \in \mathcal{C}_d}S_{topic}(v))}_{\text{Semantic Reward}}.
\end{split}
\end{equation}

\begin{itemize}
    \item \textbf{Global Context $S_{d}(q, d)$:} Following existing work~\cite{zhuang2025linearrag,zhao20252graphrag}, this term is derived from a standard dense retriever (i.e. dot product of the question embeddings and the passage embeddings), ensuring that the documents broadly match the intent of the query.
    \item \textbf{Explicit Evidence:} It aggregates the scores of activated entities contained in the $d$-th passage. The logarithmic term prevents documents with simple keyword repetition from dominating the ranking.
    \item \textbf{Semantic Reward:} This term captures the latent thematic relevance of the document. Here, $\mathcal{C}_d$ denotes the set of unique semantic clusters (topics) appearing in document $d$, $S_{topic}(v)$ represents the global importance of cluster $v$, calculated by aggregating the semantic activation scores $\mathbf{a}_{sem}$ of all retrieved entities belonging to this cluster. This allows the system to retrieve documents that discuss the correct concept even if they lack some exact keyword matches with the query.
\end{itemize}

\paragraph{Scores Refinement via PPR.}
To enforce global consistency, we use personalized pagerank (PPR) to refine the scores on the graph $\mathcal{G}^*=(\mathcal{V}^*, \mathcal{E}^*)$ where nodes $\mathcal{V}^* = \mathcal{V}_{ent} \cup \mathcal{V}_{psg}$ include both entities and passages. The edge set $\mathcal{E}^*$ integrates structural containment (linking entity $v_i$ to passage $p_j$ if $v_i \in p_j$) and semantic similarity (linking entities in the same cluster). The process is defined as follows:
\begin{equation}
    \mathbf{r}^{(t+1)} = (1-\alpha) \mathbf{M} \mathbf{r}^{(t)} + \alpha \mathbf{r}_{init},
\end{equation}
where $\mathbf{M}$ is the normalized adjacency matrix with both entity-passage bipartite connections and entity-entity semantic links. The restart vector $\mathbf{r}_{init}$ is initialized with $S_{init}$ scores mapped to passage nodes and the stationary distribution $\mathbf{r}^{*}$ captures the final relevance of each document that can be used to determine the final top-$k$ passages for retrieval.

\subsection{Complexity Analysis}
\label{sec:complexity}
\paragraph{Construction Complexity.} Let $L$ be the total number of tokens in the corpus, $|\mathcal{V}|$ be the number of unique entities, and $d$ be the embedding dimension. The construction phase consists of NER and Clustering. First, lightweight NER models (e.g., SpaCy) process text linearly with corpus length, so its time complexity is $\mathcal{O}(L)$. Next, the BIRCH algorithm constructs the CF-Tree in a single scan of the entity embeddings, whose complexity is $\mathcal{O}(|\mathcal{V}| \log (|\mathcal{V}|/B))$, where $B$ is the branching factor. Finally, the total construction time complexity is $\mathcal{O}(L + |\mathcal{V}|d)$. Since $|\mathcal{V}| \ll L$, the whole construction takes linear time with corpus length.

\paragraph{Retrieval Complexity.}
The retrieval overhead is dominated by sparse matrix operations on $\mathcal{G}^*$, scaling linearly with the number of edges $\text{nnz}(\mathbf{A}^*)$. Physically, these edges correspond to entity mentions within the corpus. Since the number of mentions is strictly bounded by the total token count $L$, the complexity is $\mathcal{O}(L)$. This linear scalability ensures \EHGRAG remains efficient even for long-context processing.

\section{Experiments}
\label{sec:experiments}


\subsection{Experimental Setup}

\paragraph{Datasets and Baselines.}
Following prior work~\cite{jimenez2024hipporag, zhuang2025linearrag}, we evaluate our method on four datasets including three multi-hop reasoning benchmarks: \textbf{HotpotQA} \cite{yang2018hotpotqa}, \textbf{2WikiMultiHop} \cite{xanh2020_2wikimultihop}, \textbf{MuSiQue} \cite{trivedi2022musique} and one domain-specific dataset \textbf{Medical} from GraphRAG-Bench~\cite{xiang2025use}. Detailed dataset statistics are provided in Appendix~\ref{app:datasets}. We compare \EHGRAG against two groups of baselines: (1) direct zero-shot LLM inference, including LLaMA3-8B, LLaMA3-13B~\cite{dubey2024llama3}, Qwen3-8B~\cite{yang2025qwen3}, GPT-3.5-trubo, GPT-4o-mini~\cite{openai2023gpt4} and (2) retrieval-augmented-generation methods, including vanilla RAG, KGP~\cite{wang2024knowledge}, G-retriever~\cite{he2024g}, GraphRAG~\cite{edge2024local}, RAPTOR ~\cite{sarthi2024raptor}, E$^2$GraphRAG~\cite{zhao20252graphrag}, LightRAG~\cite{guo2024lightrag}, HippoRAG~\cite{jimenez2024hipporag}, GFM-RAG~\cite{luo2025gfm}, HippoRAG2~\cite{gutiérrez2025hipporag2} and LinearRAG~\cite{zhuang2025linearrag}.

\paragraph{Evaluation Metric.}
To validate the effectiveness, we adapt two widely used metrics following existing work \cite{zhuang2025linearrag,wang2025archrag}: 1. \textbf{SubEM} utilizes whether the ground truth answer is included in the response to determine the correctness for each question. 2. \textbf{LLM-Acc} uses the LLM to assess the correctness of each response. For the Medical dataset, we only evaluate the LLM-Acc metric because the ground truth answers contain multiple statements, which makes SubEM cannot validate the effectiveness.

\paragraph{Implementation Details.}
For a fair comparison, we utilize \textbf{all-mpnet-base-v2} \cite{song2020mpnet} as the embedding model and GPT-4o-mini~\cite{openai2023gpt4} as the generator and evaluator for all methods. For the parameter $k$ in the top-$k$ retrieval, we set $k=5$ for all methods. Further details regarding machine configuration and hyperparameters are listed in Appendix~\ref{app:config}.

\subsection{Generation Performance}

\begin{table*}[ht]
\centering
\renewcommand{\arraystretch}{0.9}
\begin{tabular}{c c c c c c c c}
\toprule
\multirow{2}{*}{\textbf{Method}} & \multicolumn{2}{c}{\textbf{HotpotQA}} & \multicolumn{2}{c}{\textbf{2WikiMultiHop}} & \multicolumn{2}{c}{\textbf{MuSiQue}} & \multicolumn{1}{c}{\textbf{Medical}} \\
\cmidrule(lr){2-3} \cmidrule(lr){4-5} \cmidrule(lr){6-7} \cmidrule(lr){8-8}
& SubEM & LLM-Acc & SubEM & LLM-Acc & SubEM & LLM-Acc & LLM-Acc \\
\midrule
\multicolumn{8}{c}{\textbf{Direct Zero-shot LLM Inference}} \\
\midrule
llama-8B & 31.10 & 27.30 & 33.60 & 16.20 & 7.40 & 8.10 & 27.31 \\
llama-13B & 24.20 & 16.80 & 21.90 & 10.50 & 3.30 & 4.40 & 28.86 \\
Qwen3-8B & 25.10 & 34.70 & 29.80 & 27.10 & 6.50 & 19.80 & 58.39\\
GPT-3.5-turbo & 33.40 & 43.20 & 28.70 & 31.00 & 10.30 & 21.90 & 45.60 \\
GPT-4o-mini & 38.90 & 40.20 & 36.30 & 31.40 & 13.60 & 15.80 & 42.10 \\
\midrule
\multicolumn{8}{c}{\textbf{Retrieval-Augmented-Generation Methods}} \\
\midrule
Vanilla RAG & 55.70 & 58.60 & 48.60 & 43.00 & 26.10 & 29.60 & 61.68 \\
KGP & 61.50 & 60.90 & 31.60 & 30.00 & 25.60 & 30.10 & 54.22 \\
G-retriever & 42.20 & 40.60 & 46.60 & 27.10 & 14.40 & 15.50 & 50.36 \\
GraphRAG & 58.60 & 59.80 & 49.40 & 41.60 & 24.30 & 28.70 & 48.50 \\
RAPTOR & 55.90 & 58.30 & 50.10 & 42.10 & 23.30 & 27.40 & 55.75 \\
E$^2$GraphRAG & 61.00 & 63.90 & 54.30 & 38.10 & 23.80 & 26.20 & 58.00 \\
LightRAG & 60.30 & 59.50 & 55.20 & 39.00 & 27.40 & 28.60 & 54.36 \\
HippoRAG & 57.00 & 59.30 & 66.10 & 59.90 & 29.30 & 24.10 & 55.04 \\
GFM-RAG & 62.70 & 65.60 & 66.80 & 59.60 & 29.90 & 34.60 & 56.07 \\
HippoRAG2 & 62.90 & 64.30 & 62.70 & 55.00 & 31.00 & 35.00 & 60.77 \\
LinearRAG & \underline{64.30} & \underline{66.50} & \underline{70.20} & \underline{63.70} & \underline{33.90} & \underline{37.00} & \underline{63.72} \\
\midrule
\textbf{\EHGRAG} & \textbf{65.70} & \textbf{69.30} & \textbf{73.40} & \textbf{70.60} & \textbf{34.30} & \textbf{38.40} & \textbf{65.32} \\
\bottomrule
\end{tabular}
 \vspace{-1mm}
\caption{Result (\%) of baselines and \EHGRAG on four benchmark datasets in terms of SubEM metric and LLM Evaluation Accuracy. The best result for each dataset is highlighted in \textbf{bold}, while the second result is indicated with an \underline{underline}.}
\vspace{-2mm}
\label{tab:results}
\end{table*}

Table~\ref{tab:results} presents the results across four datasets. We observe that \EHGRAG consistently outperforms all baselines. Graph-based RAG methods generally surpass zero-shot baselines and vanilla RAG, confirming the necessity of graph construction for multi-hop reasoning. Among graph-based RAG approaches, \EHGRAG achieves superior performance. Notably, on the \textbf{2WikiMultiHop} dataset, our method achieves a substantial gain of $3.2\%$ in SubEM and $6.9\%$ in LLM-Acc over LinearRAG, which is the state-of-the-art graph-based RAG method. This dataset frequently involves reasoning chains requiring entity aliasing (e.g., linking monarch of UK to the Queen). While LinearRAG relies on structural co-occurrence within sentences, \EHGRAG utilizes semantic hyperedges and diffusion to bridge disjoint but semantically related entities, thereby enabling more robust multi-hop retrieval. Similar gains are observed on HotpotQA ($+1.4\%$ SubEM and $+2.8\%$ LLM-Acc), indicating that incorporating latent semantic correlations improves robustness without introducing significant noise.

Furthermore, on the \textbf{Medical} dataset, we observe that several LLM-heavy graph-based methods are inferior to Vanilla RAG. This phenomenon suggests that in specialized and long-context domains, relying on low-parameter LLMs for explicit graph extraction may introduce structural noise or extraction errors that degrade retrieval quality. Conversely, \EHGRAG outperforms the strongest baseline by $1.6\%$, validating the effectiveness of our proposed semantic hypergraph construction. By leveraging implicit semantic distributions rather than relying solely on rigid, LLM-extracted edges, our method maintains robustness even facing the complex domain-specific corpus.

\subsection{Ablation Study}
To understand the impact of different hyperedge types, we conduct an ablation study by removing specific components. The results are summarized in Table~\ref{tab:ablation}.

\begin{table}[ht]
\centering
\resizebox{\columnwidth}{!}{%
\begin{tabular}{l c c}
\toprule
\textbf{Variant} & \textbf{2WikiMultiHop} & \textbf{HotpotQA}\\
\midrule
\textbf{\EHGRAG (Full)} & \textbf{70.60} & \textbf{69.30} \\
w/o \textit{Sem}-Diffusion & 67.30($\downarrow 3.3\%$) & 68.50($\downarrow 0.8\%$) \\
w/o Filtering & 68.20($\downarrow 2.4\%$) & 67.90($\downarrow 1.4\%$) \\
w/o \textit{Str}-Diffusion & 66.90($\downarrow 3.7\%$) & 66.80($\downarrow 2.5\%$) \\
w/o PPR Refine & 63.10($\downarrow 7.5\%$) & 68.50($\downarrow 0.8\%$) \\
\bottomrule
\end{tabular}
}
\vspace{-1mm}
\caption{Ablation study results (LLM-Acc). \textit{Sem} means semantic and \textit{Str} means structural.}
\label{tab:ablation}
\vspace{-3mm}
\end{table}

As shown in Table~\ref{tab:ablation}, the full \EHGRAG model consistently outperforms all variants on both datasets, though the impact of specific components varies. \textit{Str}-Diffusion proves universally critical, with its removal causing significant drops on both 2WikiMultiHop (3.7\%) and HotpotQA (2.5\%), confirming the necessity of iterative structural propagation. Interestingly, \textit{Sem}-Diffusion and PPR Refine show more pronounced effects on 2WikiMultiHop (drops of 3.3\% and 7.5\%) compared to HotpotQA (0.8\% and 0.8\%). This suggests that 2WikiMultiHop contains more semantically disjoint entities and complex global dependencies, thereby relying more heavily on latent semantic bridging and global graph consistency for robust reasoning. Finally, the w/o Filtering results demonstrate that query-gated filtering consistently benefits both datasets by reducing noise. Overall, these results prove that combining structural and semantic diffusion with noise filtering and PPR refinement is essential for robust multi-hop retrieval.

\subsection{Efficiency Analysis}

We analyze the computational efficiency of \EHGRAG compared to representative baselines on the 2WikiMultiHop and HotpotQA datasets. As shown in Figure~\ref{fig:efficiency}, we report indexing time, token consumption and overall retrieval time. \EHGRAG demonstrates superior efficiency compared to LLM-heavy baselines like GraphRAG and LightRAG. It maintains zero token consumption and completes indexing in just 267.5 seconds, which is comparable to the state-of-the-art lightweight method LinearRAG, confirming that the semantic hyperedges via BIRCH clustering adds negligible computational overhead while maintaining linear complexity.

\begin{figure*}[ht] 
    \centering

    \includegraphics[width=0.9\linewidth]{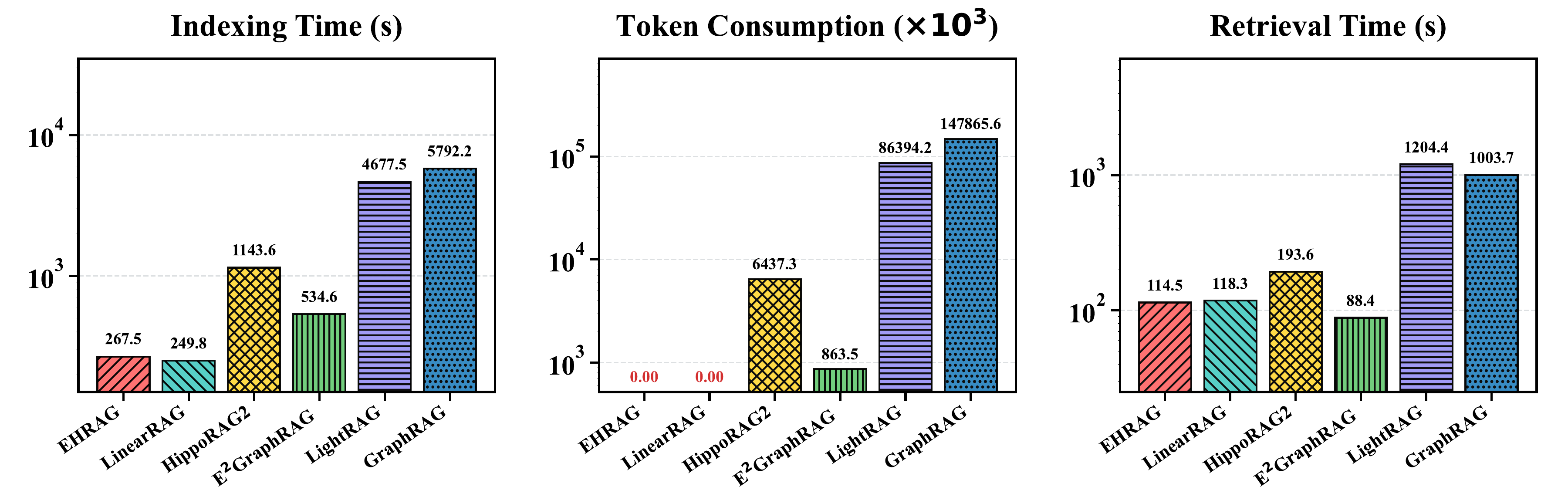}
    \vspace{-1mm} 
    \caption{Efficiency comparison on 2WikiMultiHop. We report the Indexing Time, Token Consumption, and Retrieval Time for different methods. Note that the y-axis is in log scale.}
    \label{fig:efficiency}
    \vspace{-1mm} 
\end{figure*}

In terms of retrieval efficiency, \EHGRAG outperforms most baselines including HippoRAG2 and LinearRAG. While E$^2$GraphRAG exhibits slightly lower latency, it significantly lags behind state-of-the-art graph-based RAG in generation performance and needs higher indexing overhead. Consequently, \EHGRAG achieves the best balance. It possesses the most advanced multi-hop reasoning capability, and its efficiency is comparable to the most lightweight baseline.

Because the retrieval overhead is dominated by sparse matrix operations that scale linearly, the actual graph traversal takes only a fraction of a second per query. We profiled the average inference latency per query (in milliseconds) on the 2WikiMultiHop dataset using our standard hardware setup (NVIDIA RTX 4090). The detailed breakdown is presented in Table~\ref{tab:latency_breakdown}.

\begin{table}[h]
    \centering
    \begin{tabular}{lrr}
        \toprule
        \textbf{Retrieval Stage} & \textbf{Latency} & \textbf{Percent} \\
        \midrule
        1. Lightweight NER          & 21.4 ms & 18.3\% \\
        2. Entity Embedding         & 28.8 ms & 24.7\% \\
        3. Anchor Initialization    & 5.2 ms  & 4.5\%  \\
        4. Hybrid Diffusion         & 4.6 ms  & 3.9\%  \\
        5. Evidence Scoring & 21.4 ms & 18.3\% \\
        6. Topic Scoring  & 2.1 ms  & 1.8\%  \\
        7. PPR Refinement           & 33.3 ms & 28.5\% \\
        \bottomrule
    \end{tabular}
    \caption{Per-query inference latency breakdown on 2WikiMultiHop (NVIDIA RTX 4090).}
    \label{tab:latency_breakdown}
\end{table}

This detailed breakdown clearly demonstrates that our core algorithmic contributions are highly efficient. The hybrid diffusion algorithm consumes merely 4.6 milliseconds per query (only 3.9\% of the total time). Furthermore, while the overall passage scoring step takes time, our newly introduced topic-based scoring only takes 2.1 milliseconds (1.8\%). This proves that adopting a hypergraph, performing iterative diffusion, and utilizing topic scoring does not add a heavy burden to the online inference process.

Instead, the majority of the inference time is occupied by standard pipeline components. Specifically, PPR refinement (33.3 ms), entity embedding generation (28.8 ms), standard evidence scoring (21.4 ms), and lightweight NER extraction (21.4 ms) take up the bulk of the time. Since these standard steps currently dominate the retrieval process, they represent the primary ceiling for latency. Overall, this analysis confirms that the efficiency bottleneck lies in conventional pipeline components rather than in our proposed components, demonstrating that our method achieves improved retrieval quality without sacrificing inference efficiency.

\subsection{Parameter Sensitivity Analysis}
\label{sec:param_sensitivity}

\begin{figure*}[ht]
    \centering
    \includegraphics[width=0.95\linewidth]{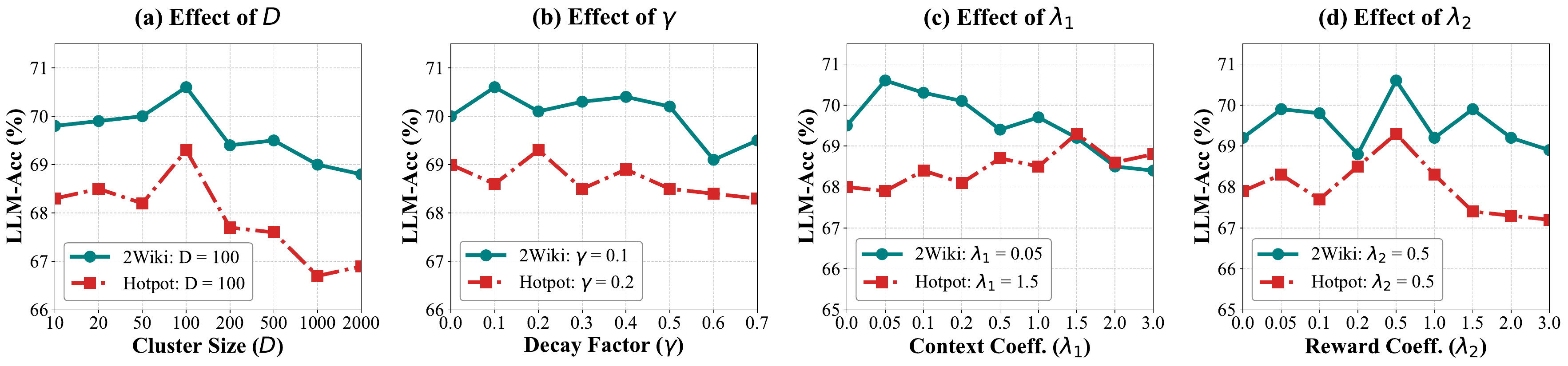}
    \vspace{-1mm}
    \caption{
    Parameter sensitivity analysis on 2WikiMultiHop (2Wiki) and HotpotQA (Hotpot) datasets. 
    }
    \label{fig:sensitivity}
    \vspace{-2mm}
\end{figure*}

To evaluate the robustness of \EHGRAG, we investigate the impact of four key hyperparameters on the 2WikiMultiHop and HotpotQA datasets including the number of nodes in a cluster $D$, the semantic propagation decay factor $\gamma$, the global context coefficient $\lambda_1$, and the semantic reward $\lambda_2$.

\textbf{Impact of Cluster Size ($D$):} The parameter $D$ controls the amount of entity within one cluster. As shown in Figure~\ref{fig:sensitivity}(a), performance initially improves as $D$ increases, peaking at $D=100$ for both datasets (70.6\% on 2WikiMultiHop and 69.3\% on HotpotQA). Setting $D$ larger than $100$ leads to a performance decline because incorporating excessive entities introduces irrelevant noise that distracts the LLM reasoning process. Thus, we recommend set $D=100$ initially for all datasets.

\textbf{Impact of Decay Factor ($\gamma$):} Figure~\ref{fig:sensitivity}(b) illustrates the effect of the semantic propagation decay factor. The model achieves optimal performance at $\gamma=0.1$ for 2WikiMultiHop and $\gamma=0.2$ for HotpotQA. This indicates that a moderate decay is necessary to maintain the focus of semantic expansion. For applying \EHGRAG on other datasets, tuning the gamma smaller than $0.5$ is recommended.

\textbf{Impact of Coefficients ($\lambda_1$ and $\lambda_2$):} Figures~\ref{fig:sensitivity}(c) and (d) analyze the balancing coefficients. For the global context coefficient $\lambda_1$, the optimal value varies significantly between datasets ($0.05$ for 2WikiMultiHop while $1.5$ for HotpotQA), suggesting that different datasets require varying degrees of global context integration. In contrast, the semantic reward coefficient $\lambda_2$ demonstrates strong stability, with both datasets achieving their peak performance at $\lambda_2=0.5$.

For other hyperparameters such as threshold $\epsilon$, sentence number $L$ and iteration number $T$, we set them based on the analysis and common values in existing studies \cite{zhuang2025linearrag,zhao20252graphrag,jimenez2024hipporag,gutiérrez2025hipporag2}. The details of all hyperparameters are listed in Appendix \ref{app:config}.

\section{Conclusion}
\label{sec:conclusion}

In this paper, we introduced \textbf{\EHGRAG}, a novel lightweight graph-based RAG framework that addresses the semantic limitations of existing efficient graph-based retrieval methods. Unlike existing lightweight methods that rely solely on structural co-occurrence, \EHGRAG constructs a hybrid hypergraph that unifies explicit document structures with implicit semantic correlations via embedding-based clustering. This topology enables a structure-semantic hybrid diffusion process that effectively bridges disjoint but semantically related entities, facilitating robust multi-hop reasoning. Extensive experiments across four benchmark datasets demonstrate that \EHGRAG significantly outperforms state-of-the-art baselines while maintaining linear indexing complexity and zero token consumption. To sum up, our work offers a scalable and effective solution for knowledge-intensive tasks, demonstrating that lightweight semantic construction and semantic-based diffusion can also significantly improve the performance of graph-based RAG.

\end{sloppy}

\section*{Limitations}
Despite achieving state-of-the-art performance with linear indexing complexity, EHRAG remains sensitive to several key hyperparameters, such as the cluster size $D$ and the decay factor $\gamma$, which may necessitate specific tuning for different datasets. Furthermore, while semantic hyperedges effectively bridge disjoint entities, the framework's reliance on the quality of initial entity extraction and text embeddings could potentially introduce structural noise in extremely specialized domains.

\section*{Acknowledgements}
This work is partially supported by National Key R\&D Program of China under Grant No. 2023YFF0725100, by the National Natural Science Foundation of China (NSFC) under Grant No. 62402410, by Guangdong Provincial Project (No. 2023QN10X025), by Guangdong Basic and Applied Basic Research Foundation under Grant No. 2023A1515110131, by Guangzhou Municipal Education Bureau (No. 2024312263), by Nansha District Project (No. 2023ZD022), and by HKUST(GZ) Kunpeng\&Ascend Center of Cultivation.

\bibliography{custom}

\clearpage

\appendix

\section{Theoretical Analysis}
\label{sec:theory}

In this section, we provide a rigorous justification for \EHGRAG using the Latent Space Model (LSM) \cite{mao2024revisiting}. We demonstrate how constructing semantic hyperedges via BIRCH clustering explicitly bridges the gap between disjoint entities by strictly tightening the upper bound of their latent distance.

\subsection{Latent Space Modeling}
Following \cite{mao2024revisiting,sarkar2011theoretical}, we model the corpus graph $\mathcal{G}=(\mathcal{V}, \mathcal{E})$ in a $D$-dimensional latent Euclidean space $\mathbb{R}^D$. Each entity $v_i$ has a latent position $\mathbf{z}_i$ and an influence radius $r_i$. The probability of a link between entities $i$ and $j$ is governed by their latent distance $d_{ij} = \|\mathbf{z}_i - \mathbf{z}_j\|$:
\begin{equation}
    P(i \sim j \mid d_{ij}) = \frac{1}{1 + \exp(\alpha(d_{ij} - \tau))}
\end{equation}
where $\alpha > 0$ is a scaling factor. A smaller $d_{ij}$ implies a higher retrieval probability.

\subsection{The Semantic Gap in Structural Retrieval}
In purely structural RAG systems, links rely on explicit sentence co-occurrence. Let $\eta_{ij}^{str} = |\mathcal{N}_{str}(i) \cap \mathcal{N}_{str}(j)|$ be the number of structural common neighbors.
For semantically related but spatially disjoint entities (e.g., in different documents), we have $\eta_{ij}^{str} \to 0$. According to Proposition 1 in \cite{mao2024revisiting}, the latent distance $d_{ij}$ is loosely bounded:
\begin{equation}
    d_{ij} \le 2\sqrt{r_{ij}^{\max} - \left(\frac{\eta_{ij}^{str}/N - \epsilon}{V(1)}\right)^{2/D}}
\end{equation}
where $r_{ij}^{\max} = \max(r_i, r_j)$ and $V(1)$ is the unit hypersphere volume. As $\eta_{ij}^{str} \to 0$, the subtracted term vanishes, leaving $d_{ij}$ close to its maximum possible value $2\sqrt{r_{ij}^{\max}}$, resulting in $P(i \sim j) \to 0$. This mathematically quantifies the \textit{Semantic Gap}.

\subsection{Bridging via Clustering-Induced Hyperedges}
\EHGRAG overcomes this by utilizing BIRCH to construct semantic hyperedges.
\begin{definition}[Cluster-Induced Connectivity]
    Let $\mathcal{C} = \{C_1, \dots, C_K\}$ be the clusters generated by BIRCH. For a cluster $C_k$ with centroid $\mathbf{\mu}_k$ and threshold radius $T$, any entity $v_i \in C_k$ satisfies $\|\mathbf{x}_i - \mathbf{\mu}_k\| \le T$ in the embedding space. We construct a semantic hyperedge $e_{sem}^k$ connecting all $v \in C_k$.
\end{definition}

This construction transforms feature compactness into structural connectivity. We formalize this effect as a \textit{Feature Proximity (FP)} term $\beta_{ij}$, which acts as a "synthetic" common neighbor probability.

\begin{theorem}[Cluster-Tightened Distance Bound]
    For any two entities $i, j$ belonging to the same semantic hyperedge $e_{sem}^k$ (i.e., $i, j \in C_k$), the BIRCH clustering guarantees a feature proximity lower bound $\beta_{min} \propto \exp(-4T^2)$.
    Consequently, the latent distance $d_{ij}$ is tightly bounded by:
    \begin{equation}
    \begin{split}
        d_{ij} &\le 2\sqrt{r_{ij}^{\max} - \Delta_{sem}}, \\
        \text{where } \Delta_{sem} &= \left(\frac{\beta_{ij} + \mathcal{A}(r_i, r_j, d_{ij})}{V(1)}\right)^{2/D}.
    \end{split}
    \label{eq:cluster_bound}
    \end{equation}
    Here, $\mathcal{A}(\cdot)$ is the intersection volume of influence spheres, and $\beta_{ij}$ represents the probabilistic connection strength induced by the semantic cluster.
\end{theorem}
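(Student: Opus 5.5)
The plan has two parts. First I derive the feature-proximity lower bound $\beta_{\min}$ from the BIRCH radius guarantee in the Cluster-Induced Connectivity definition. Then I rerun the geometric argument behind Proposition 1 of \cite{mao2024revisiting}, with $\beta_{ij}$ taking the place of the structural common-neighbour mass $\eta^{str}_{ij}/N - \epsilon$.

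\textbf{Step 1: the bound on $\beta_{ij}$.} I model feature proximity with a Gaussian kernel, $\beta_{ij} \propto \exp(-\|\mathbf{x}_i-\mathbf{x}_j\|^2)$; this is the same kernel shape as the one used in $\mathbf{H}^{sem}$. For $i,j \in C_k$ the triangle inequality through the centroid gives
\[
\|\mathbf{x}_i-\mathbf{x}_j\| \le \|\mathbf{x}_i-\mu_k\|+\|\mu_k-\mathbf{x}_j\| \le 2T .
\]
Since the kernel is monotone decreasing in distance, $\beta_{ij} \ge \beta_{\min} \propto \exp(-4T^2)$. This step is routine.

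\textbf{Step 2: the latent-space argument.} I follow the proof of Proposition 1 in \cite{mao2024revisiting}.

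\begin{enumerate}
\item Each entity $i$ influences the ball $B(\mathbf{z}_i, r_i)$. A common neighbour, whether structural or synthetic, must lie in the intersection $B(\mathbf{z}_i,r_i)\cap B(\mathbf{z}_j,r_j)$.
\item The empirical probability that $i$ and $j$ share a neighbour therefore lower-bounds the volume of that intersection, up to the concentration slack $\epsilon$ (a Hoeffding bound over the $N$ nodes).
\item I treat the semantic hyperedge as an additional independent source of shared neighbourhood. Its contribution is at least $\beta_{ij}$, and it adds to the overlap term $\mathcal{A}(r_i,r_j,d_{ij})$. So the total shared probability mass is at least $\beta_{ij}+\mathcal{A}(r_i,r_j,d_{ij})$.
\item The intersection of two balls at centre distance $d_{ij}$ is contained in a ball of radius $\sqrt{r_{ij}^{\max}-d_{ij}^2/4}$ (in the paper's convention, where $r$ plays the role of a squared radius). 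Its volume is therefore at most $V(1)\,(r_{ij}^{\max}-d_{ij}^2/4)^{D/2}$.
\end{enumerate}

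Chaining these bounds gives
\[
\bigl(\beta_{ij}+\mathcal{A}\bigr)/V(1) \le \bigl(r_{ij}^{\max}-d_{ij}^2/4\bigr)^{D/2}.
\]
Raising both sides to the power $2/D$ and rearranging yields $d_{ij}\le 2\sqrt{r_{ij}^{\max}-\Delta_{sem}}$, which is \eqref{eq:cluster_bound}.

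\textbf{Step 3: tightness.} Because $\beta_{ij}\ge\beta_{\min}>0$, we get $\Delta_{sem}\ge(\beta_{\min}/V(1))^{2/D}>0$ even when $\eta^{str}_{ij}\to 0$. This strictly tightens the structural bound from the previous subsection, which is the sense of ``tightly bounded'' in the statement.

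\textbf{Main obstacle.} The hard step is Step 2(c): justifying that a cluster-induced $\beta_{ij}$ may be added to the geometric intersection volume. It is a feature-space quantity, not a latent-space volume. My first attempt is to assume, as LSM analyses typically do, that embeddings are a noisy Lipschitz image of latent positions, so that feature proximity implies a common latent neighbourhood of measure $\beta_{ij}$. If that fails, I would instead redefine the link model so that semantic hyperedge membership contributes a term $\beta_{ij}$ to the common-neighbour probability, and state the result conditionally on that modelling assumption. A secondary subtlety is that $\mathcal{A}$ depends on $d_{ij}$, which makes the bound implicit. I would note that it remains valid as stated, and that dropping $\mathcal{A}\ge 0$ gives an explicit bound in terms of $\beta_{\min}$ alone.
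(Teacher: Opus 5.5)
Your plan is the paper's proof. You use the triangle inequality through $\mu_k$ to get $\|\mathbf{x}_i-\mathbf{x}_j\|\le 2T$, a Gaussian kernel to get $\beta_{\min}\propto\exp(-4T^2)$, then add $\beta_{ij}$ to the intersection volume and invert the hypersphere bound. The paper does your Step 2(c) by bare assertion: the hyperedge is ``equivalent to injecting'' $\beta_{ij}$ into the intersection volume. That assertion is exactly your fallback of redefining the link model.

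You are right that 2(c) is where the argument breaks, but the problem is not just a missing justification. Take $\mathcal{A}(r_i,r_j,d_{ij})$ to be the actual lens volume. Then 2(a), 2(c) and 2(d) are inconsistent. By 2(a), every witness of shared neighbourhood, structural or synthetic, lies in the lens. Under the uniform normalisation behind 2(b), the lens has mass $\mathcal{A}$. So ``shared mass at least $\beta_{ij}+\mathcal{A}$'' forces $\beta_{ij}\le 0$.

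Put differently, 2(d) caps only the lens. To chain it you need $\beta_{ij}\le V(1)\bigl(r^{\max}_{ij}-d_{ij}^2/4\bigr)^{D/2}-\mathcal{A}(r_i,r_j,d_{ij})$, the slack between the enclosing ball and the lens. Nothing about the clustering controls that slack, and it can be zero. For $r_i=r_j$ and $d_{ij}=0$ the lens is the enclosing ball, so $\Delta_{sem}>r^{\max}_{ij}$ for every $\beta_{ij}>0$ and the claimed bound has a negative radicand. By continuity the same happens for all small $d_{ij}$, which are exactly the closest pairs. So the inequality with the literal lens volume inside $\Delta_{sem}$ is false, and no assumption on $\beta_{ij}$ alone repairs it.

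Your Lipschitz route also points the wrong way. If $\mathbf{x}_i=f(\mathbf{z}_i)+\xi_i$ with $f$ Lipschitz, then latent proximity implies feature proximity, not the converse. You would need an inverse (co-Lipschitz) condition, and that would bound $d_{ij}$ by $\mathcal{O}(T)$ directly, making the volume argument unnecessary.

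What is provable treats $\beta_{ij}$ like the structural evidence $\eta^{str}_{ij}/N-\epsilon$: as a lower bound on the lens, not an addition to it. State as an explicit modelling hypothesis that cluster co-membership and sentence co-occurrence witness disjoint parts of the lens, $\beta_{ij}+\eta^{str}_{ij}/N-\epsilon\le\mathcal{A}(r_i,r_j,d_{ij})$, with the same high-probability qualifier as the structural bound. Your 2(d) then gives $d_{ij}\le 2\sqrt{r^{\max}_{ij}-\bigl((\beta_{ij}+\eta^{str}_{ij}/N-\epsilon)/V(1)\bigr)^{2/D}}$, and your Step 3 monotonicity argument goes through. Under the simpler hypothesis $\beta_{ij}\le\mathcal{A}(r_i,r_j,d_{ij})$, you get the explicit $\beta_{\min}$-only bound you already mention.

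In short, $\mathcal{A}(r_i,r_j,d_{ij})$ in $\Delta_{sem}$ must be replaced by the empirical structural term or dropped. That replacement, together with the explicit hypothesis, is the missing idea.
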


\begin{proof}
    We extend the proof from \cite{mao2024revisiting}. The existence of a semantic hyperedge $e_{sem}^k$ containing $i$ and $j$ introduces a direct path in the hypergraph.
    In the LSM, this is equivalent to injecting a non-zero feature proximity term $\beta_{ij}$ into the intersection volume.
    Since $i, j \in C_k$, by triangle inequality, their embedding distance $\|\mathbf{x}_i - \mathbf{x}_j\| \le 2T$.
    Mapping this feature distance to the connection probability space, we obtain $\beta_{ij} > 0$.
    
    Substituting the augmented volume $\mathcal{A}' = \beta_{ij} + \mathcal{A}(r_i, r_j, d_{ij})$ into the hypersphere packing bound:
    \begin{equation}
    \begin{aligned}
        \frac{\mathcal{A}'}{V(1)} & \le \left(r_{ij}^{\max} - \left(\frac{d_{ij}}{2}\right)^2\right)^{D/2} \\
        \implies d_{ij} & \le 2\sqrt{r_{ij}^{\max} - \left(\frac{\beta_{ij} + \mathcal{A}}{V(1)}\right)^{2/D}}
    \end{aligned}
    \end{equation}
    Critically, for disjoint entities where $\mathcal{A} \approx 0$, the term $\beta_{ij}$ (guaranteed by BIRCH clustering) ensures that the subtraction term $\Delta_{sem}$ is strictly positive.
    Since $f(x) = \sqrt{C - x}$ is monotonically decreasing, a larger $\beta_{ij}$ strictly decreases the upper bound of $d_{ij}$.
\end{proof}

\textbf{Implication:} Eq. \ref{eq:cluster_bound} proves that \EHGRAG guarantees a tighter latent distance bound than LinearRAG. Even if $\eta_{ij}^{str}=0$, the semantic term $\beta_{ij}$ forces the latent distance to shrink, thereby theoretically ensuring a higher retrieval probability for semantically similar entities.
\begin{table*}[!t]
\centering
\small
\renewcommand{\arraystretch}{1.3} 
\begin{tabularx}{\textwidth}{p{2.2cm}|X}
\toprule
\rowcolor{headergray} \textbf{Category} & \textbf{Content} \\ \midrule
\textbf{Question} & "Which film was released first, Aas Ka Panchhi or Phoolwari?" \\ \midrule
\textbf{Ground Truth} & Phoolwari (1946) \\ \midrule
\textbf{Support Context} & [Aas Ka Panchhi: 1961] $\leftrightarrow$ [Phoolwari: 1946] \\ \midrule

\rowcolor{baselinepink} \textbf{LinearRAG} & \textbf{Retrieved context (Top 5):} \\
\rowcolor{baselinepink} & 1) \xmark \ \textit{Shah Muhammad... Phulwarisharif}: ...born in Phulwarisharif, Bihar... (Location noise) \\
\rowcolor{baselinepink} & 2) \xmark \ \textit{Pyar Ka Bandhan}: ...is a 1963 Hindi film... \\
\rowcolor{baselinepink} & 3) \xmark \ \textit{Student of the Year}: ...released on 19 October 2012... \\
\rowcolor{baselinepink} & 4) \xmark \ \textit{Hum Dil De Chuke Sanam}: ...released internationally in 1999... \\
\rowcolor{baselinepink} & 5) \xmark \ \textit{Phool Aur Kaante}: ...began career with Phool Aur Kaante in 1991... \\
\rowcolor{baselinepink} & \textbf{Prediction:} \xmark \ Aas Ka Panchhi \\ \midrule

\rowcolor{proposelblue} \textbf{\EHGRAG} & \textbf{Retrieved context (Top 5):} \\
\rowcolor{proposelblue} & 1) \cmark \ \textbf{Aas Ka Panchhi (1961)} \& \textbf{Phoolwari (1946)}: ...Phoolwari is a 1946 film. Aas Ka Panchhi is a 1961 movie... \\
\rowcolor{proposelblue} & 2) \cmark \ \textit{Hum Dil De Chuke Sanam}: ...adaptation of Maitreyi Devi's novel... (Relatively irrelevant but correctly ranked) \\
\rowcolor{proposelblue} & 3) \xmark \ \textit{Vaibhavi Merchant}: ...choreography work in Bollywood films... \\
\rowcolor{proposelblue} & 4) \xmark \ \textit{Nitin Chandrakant Desai}: ...noted Indian art director... \\
\rowcolor{proposelblue} & 5) \cmark \ \textbf{Aas Ka Panchhi (1961)}: ...1961 Hindi movie produced by J. Om Prakash... \\
\rowcolor{proposelblue} & \textbf{Prediction:} \cmark \ Phoolwari \\ 
\bottomrule
\end{tabularx}
\caption{\textbf{Detailed Case Study comparison.} Our method (\EHGRAG) successfully retrieves the exact release years for both movies, while the Baseline is misled by geographic entities and recent film noise.}
\label{tab:detailed_case}
\end{table*}

\begin{table*}[!tbp]
\centering
\small
\begin{tabularx}{\textwidth}{l|l|X}
\toprule
\textbf{Category} & \textbf{Method} & \textbf{Key Characteristics} \\ \midrule
\textbf{Zero-shot LLM} & LLaMA3 (8B/13B), Qwen3-8B & Evaluates the internal knowledge of state-of-the-art open-source LLMs. \\
& GPT-3.5-turbo, GPT-4o-mini & Proprietary models used to establish a performance upper bound for zero-shot inference. \\ \midrule
\textbf{Standard RAG} & Vanilla RAG & Standard retrieval-augmented generation relying on vector similarity. \\ \midrule
\textbf{Graph-based RAG} & GraphRAG, KGP, G-retriever & Traditional GraphRAG methods that typically utilize LLMs for entity-relation triple extraction. \\
& RAPTOR & Builds tree-organized indices through recursive abstractive processing. \\ \midrule
\textbf{Lightweight RAG} & HippoRAG, HippoRAG2 & Neurobiologically inspired methods utilizing Personalized PageRank (PPR). \\
& LinearRAG & A state-of-the-art lightweight framework that models document structures directly via NER. \\
& E2GraphRAG, LightRAG & Recent efficient frameworks designed to streamline graph-based retrieval. \\
\bottomrule
\end{tabularx}
\caption{Comprehensive overview of baseline methods.}
\label{tab:baselines}
\end{table*}

\begin{table*}[t]
    \centering
    \small
    \resizebox{\linewidth}{!}{
    \begin{tabular}{l|cccc}
    \toprule
    \textbf{Hyperparameter} & \textbf{HotpotQA} & \textbf{2WikiMultiHop} & \textbf{MuSiQue} & \textbf{Medical} \\
    \midrule
    \textbf{NER Model} & \texttt{en\_core\_web\_trf} & \texttt{en\_core\_web\_trf} & \texttt{en\_core\_web\_trf} & \texttt{en\_core\_sci\_scibert} \\
    \textbf{Max Iterations ($T$)} & 3 & 3 & 5 & 3 \\
    \textbf{Pruning Threshold ($\epsilon$)} & 0.5 & 0.4 & 0.4 & 0.5 \\
    \textbf{Passage Ratio ($\lambda_1$)} & 1.5 & 0.05 & 2.0 & 1.5 \\
    \textbf{Sentence Number ($L$)} & 1 & 1 & 4 & 1 \\
    \textbf{Cluster Threshold ($D$)} & 100 & 100 & 100 & 100 \\
    \bottomrule
    \end{tabular}
    }
    \caption{Detailed hyperparameter settings for \EHGRAG across four benchmark datasets.}
    \label{tab:hyperparams}
\end{table*}
\section{Related Work}
\label{related-work}
\subsection{LLM-based GraphRAG via Triple Extraction}
Graphs serve as a natural and expressive representation for encoding relational knowledge, and recent advances in graph learning~\cite{song2021dynamic,song2024efficient,song2025ddfi,chendecoupled,luo2024fast,chen2023lsgnn} have significantly improved the ability to reason over such structured information. Building on this foundation, GraphRAG transforms unstructured text into structured Knowledge Graphs (KGs) to explicitly model entity relationships~\citep{zhang2025erarag,zhou2025depth,xiao2025graphragbenchchallengingdomainspecificreasoning,zhang2025survey,xiang2025use,mavromatis2024gnn}. Prominent frameworks like Microsoft's GraphRAG \cite{edge2024local} employ Large Language Models (LLMs) for Open Information Extraction (OpenIE) to construct entity-relation triples, subsequently using community detection (e.g., Leiden) to support global query answering. Similarly, RAPTOR \cite{sarthi2024raptor} builds tree-structured indices via recursive clustering.

However, these methods suffer from a construction bottleneck~\citep{edge2024local,sarthi2024raptor,jimenez2024hipporag}. The reliance on LLMs for triple extraction incurs prohibitive computational costs that scale polynomially with corpus size. Furthermore, rigid Named Entity Recognition (NER) often leads to \textit{Semantic Loss}—semantically related entities that do not physically co-occur or fail extraction remain disconnected, disrupting retrieval pathways.

\subsection{Lightweight Graph Construction}
To mitigate high costs, recent work explores lightweight strategies~\cite{huang2025ket,pan2024unifying,zhang2025adagcrag}. HippoRAG \cite{jimenez2024hipporag} and its successor \citep{gutiérrez2025hipporag2} leverage Personalized PageRank (PPR) on existing KGs to simulate associative memory. LinearRAG \cite{zhuang2025linearrag} introduces a tri-Graph architecture using lightweight tools to connect entities, sentences and passages, achieving high efficiency and outstanding QA performance on various datasets.

\paragraph{Positioning of \EHGRAG:} While the aforementioned methods reduce costs, they often overlook deep semantic correlations, relying primarily on physical textual co-occurrence. Our proposed \textbf{\EHGRAG} inherits the efficiency of lightweight construction (linear complexity) while introducing \textbf{Semantic Hyperedges}. By leveraging hypergraph topology, we explicitly resolve semantic disconnects without increasing construction overhead.

\subsection{Experimental Configuration}
\label{app:config}

All experiments were conducted on a high-performance computing server equipped with two Intel(R) Xeon(R) Platinum 8377C CPUs , theNVIDIA RTX 4090 GPU (24GB VRAM), and 512GB of RAM. Hyperparameters were tuned specifically for each dataset to handle varying reasoning complexities. The dataset-specific configurations are summarized in Table \ref{tab:hyperparams}.

\section{Case Study}
To intuitively demonstrate how \EHGRAG bridges the semantic gap, we present a qualitative analysis in Table \ref{tab:detailed_case} using a comparative query from 2WikiMultiHop. \EHGRAG utilizes semantic hyperedges to capture latent correlations. Even though the correct movie \textit{"Phoolwari"} (1946) does not share explicit structural neighbors with the query context, our clustering-based semantic construction successfully maps the query entity to the correct latent topic. This activates the relevant passage containing \textit{"Phoolwari is a 1946 film..."}, enabling the model to correctly identify that \textit{"Phoolwari"} (1946) was released before \textit{"Aas Ka Panchhi"} (1961). This case highlights \EHGRAG's robustness in filtering keyword noise and retrieving semantically aligned evidence.

\section{Baseline Descriptions}
We compare our method against two primary groups of baselines: Zero-shot LLM and RAG methods. The details are shown in Table~\ref{tab:baselines}.

\section{Dataset Descriptions}
\label{app:datasets}

We evaluate EHRAG on four benchmark datasets, including three multi-hop reasoning benchmarks and one domain-specific dataset:

\begin{itemize}[leftmargin=*]
    \item \textbf{HotpotQA}: A widely used multi-hop reasoning benchmark that requires finding and integrating evidence across multiple documents.
    \item \textbf{2WikiMultiHop}: This dataset frequently involves reasoning chains that require entity aliasing, such as linking synonymous but disjoint entities. It is characterized by having semantically disjoint entities and complex global dependencies.
    \item \textbf{MuSiQue}: A dataset comprised of multi-hop questions generated via single-hop question composition, testing deep logical integration.
    \item \textbf{Medical}: A domain-specific dataset from GraphRAG-Bench. We utilize the LLM-Acc metric for this dataset because the ground truth answers contain multiple statements, making Exact Match metrics (SubEM) less effective.
\end{itemize}

\end{document}